\documentclass[11pt]{tea}
\usepackage[utf8]{inputenc} % allow utf-8 input
\usepackage[T1]{fontenc}    % use 8-bit T1 fonts
\usepackage{hyperref}       % hyperlinks
\usepackage{url}            % simple URL typesetting
\usepackage{booktabs}       % professional-quality tables
\usepackage{booktabs}      
\usepackage{amsfonts}       % blackboard math symbols
\usepackage{nicefrac}       % compact symbols for 1/2, etc.
\usepackage{microtype}      % microtypography
\usepackage{xcolor}         % colors

\usepackage{algorithm}
\usepackage{algpseudocode}
\usepackage{amsmath}

\usepackage[table]{xcolor}

\usepackage{amsmath} % For math operators like \arg\max
\usepackage{booktabs} % Required for \toprule, \midrule, and \bottomrule
\usepackage{multirow} % Required to span text across multiple rows in the first column
\usepackage{graphicx} % Required for \resizebox to scale the tables to the text width

\usepackage{algorithm}
\usepackage{algpseudocode}
\usepackage{xcolor}
\usepackage{tikz}
\usetikzlibrary{fit,calc}

\definecolor{boxred}{RGB}{145,95,95}
\definecolor{boxgold}{RGB}{155,125,70}
\definecolor{boxblue}{RGB}{75,115,145}
\definecolor{boxgreen}{RGB}{80,125,90}

\usepackage{algorithm}
\usepackage{algpseudocode}
\usepackage{xcolor}

\definecolor{bavtred}{RGB}{140,95,95}
\definecolor{bavtgold}{RGB}{145,120,75}
\definecolor{bavtblue}{RGB}{80,115,145}
\definecolor{bavtgreen}{RGB}{85,125,95}

\newcommand{\bavtblock}[2]{%
  \Statex \vspace{0.3ex}
  \begin{center}
   \vspace{0.1cm}
    {\footnotesize\textcolor{#1}{\textbf{--- #2 ---}}}
    \vspace{0.1cm}
  \end{center}
}

\usepackage{amsthm}
\newtheorem{theorem}{Theorem}

\usepackage[dvipsnames]{xcolor}
\usepackage[most]{tcolorbox}
\usepackage{listings}
\usepackage{enumitem}

\definecolor{PromptBlue}{HTML}{1F4E79}
\definecolor{PromptBlueBg}{HTML}{EAF2FB}

\definecolor{PromptGreen}{HTML}{1A531A}  

\definecolor{PromptGreenBg}{HTML}{F0F9F0}

\tcbset{
  promptbox/.style={
    enhanced,
    breakable,
    colback=PromptBlueBg,
    colframe=PromptBlue,
    coltitle=white,
    fonttitle=\bfseries,
    boxed title style={colback=PromptBlue, colframe=PromptBlue},
    title={#1},
    sharp corners,
    boxrule=0.8pt,
    left=10pt,right=10pt,top=8pt,bottom=8pt,
    before skip=10pt, after skip=14pt,
  }
}

\newtcblisting{PromptBox}[1]{%
reset, 
    enhanced,
    breakable,
    colback=PromptGreenBg,
    colframe=PromptGreen,  
    coltitle=white,       
    fonttitle=\bfseries,
    title={#1},            
    sharp corners,
    boxrule=0.8pt,
    listing only,
    toprule=0.8pt, bottomrule=0.8pt, leftrule=0.8pt, rightrule=0.8pt, 
    titlerule=0.5pt,        
    left=10pt,right=10pt,top=8pt,bottom=8pt,
    before skip=10pt, after skip=14pt,
  }

\usepackage{soul} % for \ul (underline)
\newcount\Comments
\newcommand{\kibitz}[2]{\ifnum\Comments=1{\textcolor{#1}{\textsf{\footnotesize #2}}}\fi}

\newtheorem{assumption}{Assumption}
\newtheorem{remark}{Remark}

\title{Spend Less, Reason Better: \\Budget-Aware Value Tree Search for LLM Agents}

\author[1,2,\star]{Yushu Li}
\author[1,2,\star]{Wenlong Deng}
\author[1]{Jiajin Li}
\author[1,2,\dagger]{Xiaoxiao Li}

\affiliation[1]{University of British Columbia}
\affiliation[2]{Vector Institute}

\contribution{%
  $^\star$ Equal Contribution. \\
  $^\dagger$ Correspondence to Xiaoxiao Li at \href{mailto:xiaoxiao.li@ece.ubc.ca}{xiaoxiao.li@ece.ubc.ca}%
}

\abstract{
{ Abstract}: Test-time scaling has become a dominant paradigm for improving LLM agent reliability, yet current approaches treat compute as an abundant resource, allowing agents to exhaust token and tool budgets on redundant steps or dead-end trajectories. Existing budget-aware methods either require expensive fine-tuning or rely on coarse, trajectory-level heuristics that cannot intervene mid-execution. We propose the Budget-Aware Value Tree (BAVT), a training-free inference-time framework that models multi-hop reasoning as a dynamic search tree guided by \emph{step-level value estimation} within a single LLM backbone. Another key innovation is a \emph{budget-conditioned node selection mechanism} that uses the remaining resource ratio as a natural scaling exponent over node values, providing a principled, parameter-free transition from broad exploration to greedy exploitation as the budget depletes. To combat the well-known overconfidence of LLM self-evaluation, BAVT employs a residual value predictor that scores relative progress rather than absolute state quality, enabling reliable pruning of uninformative or redundant tool calls. We further provide a theoretical convergence guarantee, proving that BAVT reaches a terminal answer with probability at least $1-\epsilon$ under an explicit finite budget bound. Extensive evaluations on four multi-hop QA benchmarks across two model families demonstrate that BAVT consistently outperforms parallel sampling baselines. Most notably, BAVT under strict low-budget constraints surpasses baseline performance at $4\times$ the resource allocation, establishing that intelligent budget management fundamentally outperforms brute-force compute scaling.}

\begin{document}
\maketitle

\begin{figure*}[t]
    \centering
    \includegraphics[width=0.82\textwidth]{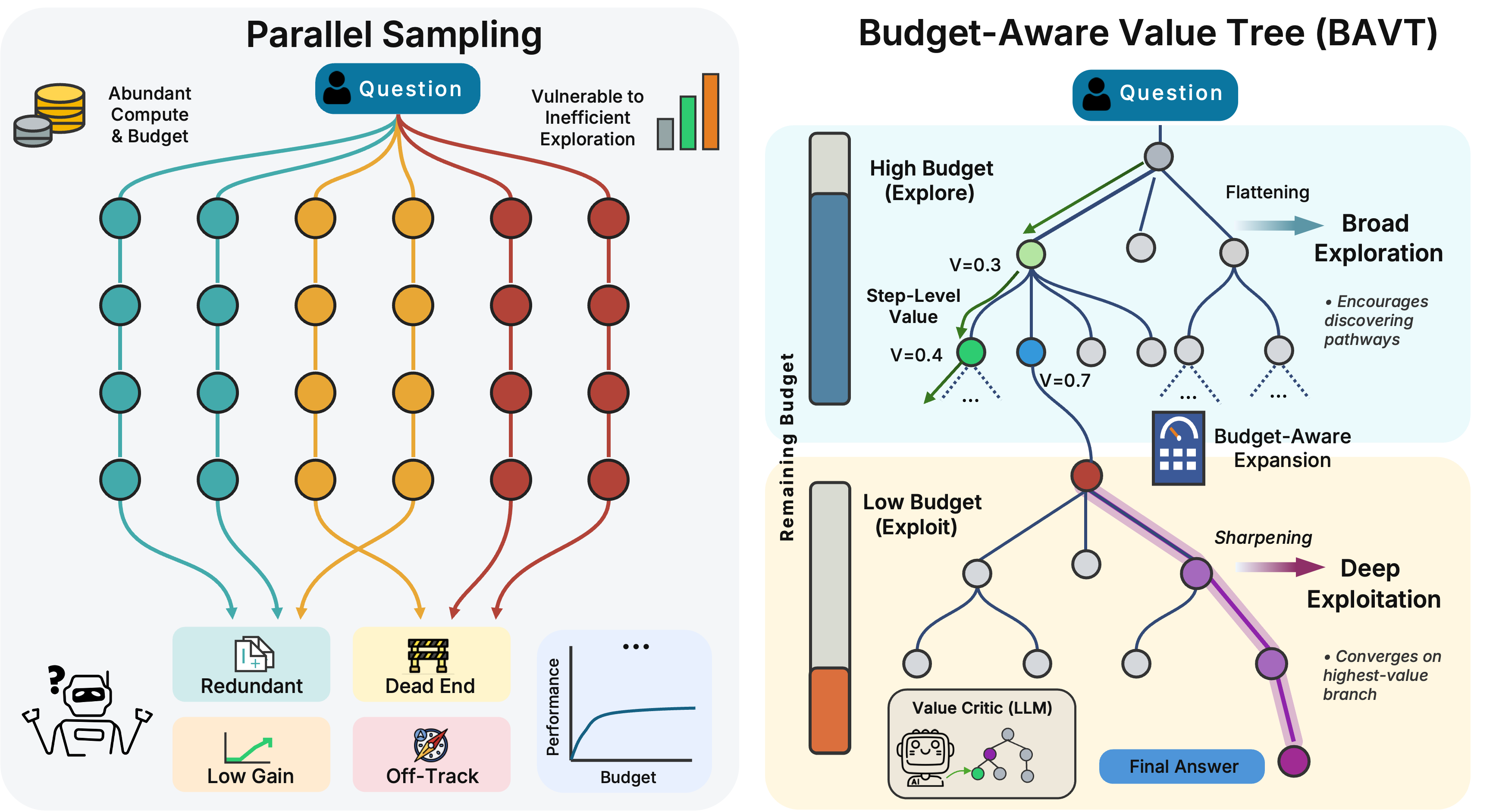}
    \caption{\textbf{Budget-Aware Value Tree (BAVT)} versus parallel sampling.
    \textbf{Left:} Parallel sampling explores many trajectories in parallel but may waste budget on redundant or dead-end paths.
    \textbf{Right:} BAVT performs tree-structured search with step-level value estimation and budget-aware expansion to adaptively shift from broad exploration (high remaining budget) to deep exploitation (low remaining budget), improving the performance--efficiency trade-off under strict resource constraints.}
    \label{fig:teaser_bavt}
    \vspace{-0.5em}
\end{figure*}

\iffalse

\fi

\section{Introduction}

The integration of external tools has transformed Large Language Models (LLMs) from passive text generators into autonomous agents capable of gathering information and executing tasks in complex environments~\citep{yao2022react,schick2023toolformer,deng2025grpo,luo2025large,jin2025search}. To improve reliability on challenging multi-hop reasoning tasks, recent work has increasingly relied on \emph{test-time scaling}~\citep{snell2025scaling,zhu2025scaling}---allocating additional computational resources during inference through reflection~\citep{shinn2023reflexion}, parallel sampling~\citep{wang2022self}, and search algorithms~\citep{yao2023tree}. Long-horizon systems~\citep{yang2024swe,openclaw_github_2026,openai_gpt53_codex_2026} exemplify this paradigm, running extended reasoning loops over hours or days. While increasing test-time compute generally improves task performance~\citep{snell2025scaling,zhu2025scaling}, a fundamental question remains underexplored:

\vspace{0.3cm}
\begin{center}
\begin{tcolorbox}[
    reset, 
    colback=gray!5, 
    colframe=black!60, 
    boxrule=0.5pt, 
    toprule=0.5pt, bottomrule=0.5pt, leftrule=0.5pt, rightrule=0.5pt, 
    sharp corners, 
    left=8pt, right=8pt, top=4pt, bottom=4pt, 
    width=0.75\textwidth
]
\centering\emph{How can autonomous agents achieve better task performance under a constrained compute budget?}
\end{tcolorbox}
\end{center}

In practice, current agents are designed to maximize accuracy under expanded or unrestricted compute budget, but rarely incorporate mechanisms for fine-grained budget control. Without structured resource management, agents frequently exhaust token limits and tool API calls on redundant or low-yield steps~\citep{cemri2025multi,lu2025exploring,kim2025cost}. What's worse, blindly allocating more resources often produces only marginal accuracy gains and yields diminishing returns after excessive spending ~\citep{liu2025budget}. Recent budget-aware methods have begun addressing this gap, but they face two critical limitations. First, approaches targeting general LLM reasoning~\citep{han2025token, li2025selfbudgeter} require expensive fine-tuning and do not transfer to autonomous agent workflows. Second, agent-specific frameworks like BATS~\citep{liu2025budget} incorporate the remaining budget into the prompt, but rely entirely on the LLM's implicit ability to self-regulate and manage budgets only at the trajectory level and lack provable certification of converging to a good solution. Critically, because these frameworks lack the ability to intervene at intermediate reasoning steps, they are fundamentally unable to detect and abandon failing trajectories in real time. As a consequence, agents routinely fall into dead ends or infinite loops, silently exhausting substantial budgets on unpromising directions before any corrective action can occur~\citep{cemri2025multi}. This absence of step-level budget-aware control represents a key barrier to deploying autonomous agents under real-world resource constraints.

To overcome these limitations, we propose the \textbf{Budget-Aware Value Tree (BAVT)}, a training-free, inference-time framework that unifies tree-structured search, step-level value estimation, and adaptive budget control within a single LLM backbone. BAVT models the reasoning process as a dynamic search tree, where nodes represent intermediate states and edges correspond to actions or tool invocations. This structure allows the agent to explore multiple candidate trajectories
instead of committing to a single linear path.  To guide the search, we introduce a \emph{step-level value critic} that evaluates the relative progress of each reasoning step. Unlike standard LLM self-evaluation, which tends toward overconfidence, our critic predicts residual value deltas, which score marginal information gain rather than absolute state quality, enabling reliable pruning of uninformative branches. Another key innovation of BAVT is a \emph{budget-conditioned node selection mechanism}: we transform node values into a sampling distribution using a power-based scaling function, where the exponent is the inverse of the remaining budget ratio. \emph{When budgets are abundant, the distribution promotes broad exploration; as the budget depletes, it sharpens to concentrate probability mass on the highest-value branches, providing a principled, parameter-free transition from exploration to exploitation}. We further provide a theoretical convergence guarantee, proving that BAVT reaches a terminal answer under an explicit finite budget bound.

Extensive evaluations on four multi-hop QA benchmarks across two model families and three budget tiers demonstrate that BAVT consistently outperforms parallel sampling baselines. Most strikingly, BAVT under strict low-budget constraints surpasses baseline performance at $4\times$ the resource allocation, establishing that intelligent budget management fundamentally outperforms brute-force compute scaling.

Our contributions are summarized as follows:
\begin{itemize}[leftmargin=*,nosep]
    \item \textbf{Budget-Aware Tree Search for Test-Time Scaling.} We formulate the problem of budget-aware agent testing-time scaling under strict token and tool-call constraints, and model the reasoning process as a dynamic search tree that enables fine-grained, step-level resource allocation.

    \item \textbf{BAVT: A Training-Free Framework with Theoretical Guarantees.} We propose the Budget-Aware Value Tree, featuring (i)~a residual value critic that scores relative progress to mitigate LLM overconfidence, and (ii)~a budget-conditioned node selection mechanism that provides a principled, parameter-free transition from exploration to exploitation as resources deplete. We prove that BAVT converges to a terminal answer under an explicit finite budget bound.

    \item \textbf{Spend Less, Reason Better.} Comprehensive evaluations across four multi-hop QA benchmarks using both instruct and reasoning models demonstrate that BAVT achieves a superior performance-efficiency trade-off at every budget level. Most notably, low-budget BAVT outperforms high-budget baselines, confirming that intelligent allocation outperforms brute-force scaling.
\end{itemize}

% ============================================================

\section{Related Work}

\subsection{Tool-Augmented LLM Agents}

The integration of external tools has significantly advanced the capabilities of Large Language Models (LLMs), transitioning them from static text generators to active agents capable of interacting with dynamic environments. Foundational frameworks such as ReAct \citep{yao2022react}, Toolformer \citep{schick2023toolformer}, and WebGPT \citep{nakano2021webgpt} have demonstrated the efficacy of interleaving reasoning traces with tool actions to solve complex queries. More recently, the development of robust orchestration frameworks, such as LangChain \citep{chase2022langchain}, and advanced agentic evaluation toolkits, like Inspect AI \citep{ukaisi2024inspect} and OctoTools \citep{lu2025octotools}, have standardized the deployment and testing of these complex multi-hop agents. While recent RL-based approaches attempt to optimize tool-use directly during training, they often suffer from severe instability and high computational overhead \citep{jin2025search, zhang2025criticsearch, sun2025zerosearch, deng2025grpo}. Consequently, deployed agents typically rely on naive autonomous loops that assume infinite resources, frequently trapping them in costly dead ends \citep{cemri2025multi, kim2025cost}.

\subsection{Test-Time Scaling}

To overcome the limitations of linear decoding, recent literature has systematically shifted toward test-time scaling—allocating more computational resources during inference to optimally improve reasoning robustness \citep{snell2025scaling}. Expanding upon these foundational scaling laws, recent works have specifically investigated scaling test-time compute for LLM agents \citep{zhu2025scaling}, emphasizing the unique performance gains achieved when autonomous systems are granted extended inference budgets for multi-step tool interactions. Methodologies such as Self-Consistency \citep{wang2022self}, Tree of Thoughts (ToT) \citep{yao2023tree}, Graph of Thoughts (GoT) \citep{besta2024graph}, and Language Agent Tree Search (LATS) \citep{zhou2023language} formulate the reasoning process as a search problem over a vast state space. Furthermore, recent advancements explore dynamically optimizing the geometry of these search spaces, such as adaptively balancing whether to expand wider or search deeper during inference \citep{inoue2025wider}. Drawing inspiration from reinforcement learning, recent works increasingly employ actor-critic paradigms during inference, while reflection mechanisms and prompt-based critics serve as value functions to evaluate intermediate states and facilitate self-correction \citep{shinn2023reflexion}. While these search-based and value-guided algorithms achieve strong performance, they are predominantly accuracy-driven and operate under the assumption of unbounded computational resources. They generally lack internal mechanisms to penalize expensive actions or adapt their search geometry based on resource depletion.

\subsection{Budget-Aware Inference}

As the economic and computational costs of deploying LLMs become a critical bottleneck, a growing subfield has focused on budget-aware inference. Initial strategies, such as model cascading \citep{chen2023frugalgpt} and routing systems like EcoAssistant \citep{zhang2023ecoassistant}, reduce costs by intelligently directing queries to cheaper models. Recent efforts in budget-aware inference have explored dynamic resource allocation for general LLM reasoning \citep{han2025token, li2025selfbudgeter}. However, these approaches are largely confined to static, closed-book problems rather than multi-hop agent workflows, and they typically rely on computationally expensive post-training to align the model's policy with resource constraints. However, autonomous agents pose a unique challenge due to the high financial cost of iterative environment interactions, such as web searches. To address this, recent works like the Budget-Aware Tool-Use (BATS) framework \citep{liu2025budget} explicitly impose limits on tool usage, while theoretical explorations identify performance phase transitions under strict multi-agent constraints \citep{liu2026phase}. Yet, these existing agent frameworks primarily manage resources using coarse heuristics or trajectory-level interventions, evaluating costs only after a full sequence fails or relying on rigid prompt-based warnings. In contrast, our proposed BAVT framework introduces fine-grained, step-level value evaluation, mathematically shifting the agent's strategy from broad exploration to greedy exploitation as the budget shrinks. 

\section{Methodology}

\subsection{Problem Formulation: Budget-Aware Agent Inference}

We study test-time scaling for tool-augmented LLM agents under a \emph{hard budget constraint}. Given a user question $x$, an agent interacts with external tools and performs intermediate reasoning steps before producing a final answer $\hat{y}$. The objective is to maximize answer correctness while strictly satisfying a predefined budget. We formalize this process as a resource-constrained deterministic decision process, defined by the tuple $(\mathcal{S}, \mathcal{A}, \mathcal{T}, \mathcal{B}, \mathcal{C})$.

\textbf{State Space ($\mathcal{S}$):} Let $s_t \in \mathcal{S}$ denote the state at step $t$. The state encapsulates the entire context available to the agent, including the initial user query, the history of prior actions, internal reasoning traces, and observations returned by external environments.

\textbf{Action Space ($\mathcal{A}$):} Let $a_t \in \mathcal{A}$ represent the action executed at step $t$. The action space encompasses both internal reasoning generations and external tool invocations.

\textbf{Transition Dynamics ($\mathcal{T}$):} The transition function $\mathcal{T}: \mathcal{S} \times \mathcal{A} \rightarrow \mathcal{S}$ dictates the environment dynamics. Given a state $s_t$ and an action $a_t$, the environment deterministically transitions to the next state $s_{t+1} = \mathcal{T}(s_t, a_t)$, appending the new action and its corresponding external observation to the context.

\textbf{Budget State Space ($\mathcal{B}$):} We formalize the budget not merely as a static constraint, but as a dynamic state space $\mathcal{B} \subseteq \mathbb{Z}_{\geq 0} \times \mathbb{Z}_{\geq 0}$ that tracks the remaining resources throughout the inference process. The process is strictly bounded by initial resource limits: the initial tool call budget $B_{\text{tool}}$ and the initial output token budget $B_{\text{token}}$. The available budget at any step $t$ is represented by the state variable $b_t = (b_{\text{tool}, t}, b_{\text{token}, t}) \in \mathcal{B}$, which is initialized as $b_0 = (B_{\text{tool}}, B_{\text{token}})$.

\textbf{Cost Function ($\mathcal{C}$):} Each action $a_t$ incurs a specific computational and financial cost defined by $\mathcal{C}(a_t) = (\mathcal{C}_{\text{tool}}(a_t), \mathcal{C}_{\text{token}}(a_t))$. The tool cost satisfies $\mathcal{C}_{\text{tool}}(a_t) = 1$ if $a_t$ is a successful tool call, and $0$ otherwise.
The token cost $\mathcal{C}_{\text{token}}(a_t)$ corresponds to the number of output tokens generated by the model for that step.

The remaining budget is updated iteratively after each step:
\begin{equation}
b_{t+1} = b_t - \mathcal{C}(a_t)
\end{equation}
which explicitly expands to the component-wise resource updates: 
% \Jiajin{What do you mean by "dual"?}
\begin{align}
b_{\text{tool}, t+1} &= b_{\text{tool}, t} - \mathcal{C}_{\text{tool}}(a_t) \nonumber \\
b_{\text{token}, t+1} &= b_{\text{token}, t} - \mathcal{C}_{\text{token}}(a_t) \label{eq:hard_budget}
\end{align}

\paragraph{Search Objective.}
We define a trajectory $\tau = (s_0, a_0, s_1, a_1, \dots, a_{T-1}, s_T)$ as a sequence of states and actions induced by the transition function $\mathcal{T}$. In this setting, test-time scaling corresponds to an informative search over the tree of states rooted at $s_0$. Instead of directly selecting a trajectory, the objective is to optimize a search policy $\pi$ that explores competing reasoning branches to identify a terminal answer $\hat{y}$ that maximizes expected correctness.

This problem setting motivates a search-based solution that performs fine-grained budget allocation across competing reasoning branches, rather than committing to a single linear trajectory.

\subsection{Overview of Budget-Aware Value Tree (BAVT)}

To effectively navigate the resource-constrained environments defined above, we introduce the Budget-Aware Value Tree (BAVT) framework, illustrated in Figure~\ref{fig:main_figure}. BAVT is a training-free architecture that fundamentally restructures agentic inference across three core pillars:

\begin{figure}[htbp]
  \centering
  \includegraphics[width=0.85\textwidth]{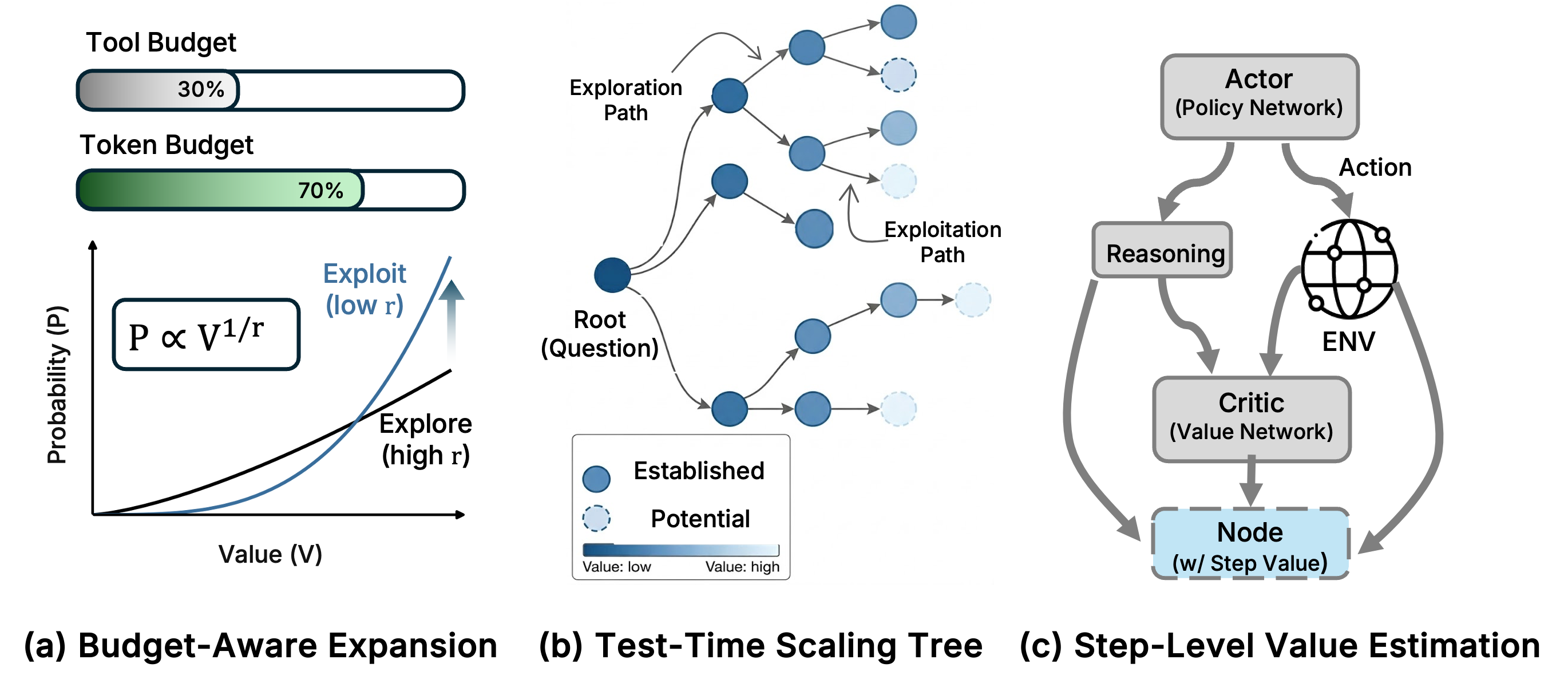}
  \caption{Overview of the Budget-Aware Value Tree (BAVT) framework. \textbf{a)} Budget-Aware Expansion dynamically adjusts the node selection distribution, shifting from exploration to exploitation as resources deplete. \textbf{b)} The Test-Time Scaling Tree models the reasoning process, allowing the agent to explore multiple paths. \textbf{c)} Step-Level Value Estimation utilizes a dual-role Actor-Critic setup within a single backbone to evaluate intermediate progress at the step level.}
  \label{fig:main_figure}
\end{figure}

\paragraph{1. Test-Time Scaling Tree.}
BAVT models the multi-hop reasoning process as a dynamic search tree designed specifically to scale compute at inference time. Within this structure, nodes represent intermediate reasoning states or external environmental observations, while edges correspond to the agent's generated actions. To populate this structure, we prompt the LLM backbone to act as a \textbf{Generator}, which observes the current state node and proposes a diverse set of potential next actions (e.g., tool calls or logical deductions). This tree-based formulation naturally facilitates test-time scaling by enabling the agent to safely branch out and explore multiple reasoning paths simultaneously without being trapped in a single dead-end trajectory.

\paragraph{2. Step-Level Value Estimation.}
To overcome the inefficiencies of delayed, trajectory-level evaluation, BAVT continuously assesses intermediate reasoning states immediately upon receiving environmental feedback. We dynamically prompt the same LLM backbone to alternate into a \textbf{Critic} role. This value-based evaluator assesses newly generated child nodes, computing a scalar value that estimates the state's distance to success. By acting as a dynamic proxy for how close the current reasoning trajectory is to a verifiable final answer, these step-level value estimations ground the tree expansion in immediate, objective information gain rather than speculative forward projection.

\paragraph{3. Budget-Aware Expansion.}
The critical mechanism that bridges the tree structure and the value estimations is our budget-aware node selection strategy. Rather than relying on static search heuristics, BAVT continuously monitors the remaining token and tool limits to mathematically adjust the probability distribution for selecting the next node to expand. This enforces a principled behavioral shift in the agent's policy: when resources are abundant, the framework encourages broad exploration of the search tree to identify promising directions; as the budget depletes, the framework aggressively transitions into greedy exploitation, forcing the agent to prioritize the highest-valued trajectory and synthesize a final answer before resources are completely exhausted.

\subsection{Step-Level Value Estimation}

\paragraph{Residual Value Prediction.}
A well-documented pathology in LLM-based self-evaluation is the tendency toward overconfidence, where models assign spuriously high absolute scores to mediocre or hallucinated reasoning steps. To mitigate this calibration failure, our critic evaluates relative progress rather than absolute state quality. Specifically, the critic predicts a residual score, or information delta ($\Delta_t$), reflecting the marginal utility of the most recent action. Let $V(n)$ denote the value of the parent node. The updated value for the newly generated child node $n'$ is computed as:
\begin{equation}
V(n') = \Phi(V(n) + \Delta_t)
\end{equation}
where $\Phi(\cdot)$ is a bounding function that restricts the value to a normalized range. By anchoring evaluations to relative deltas, the value function more reliably captures the true trajectory of reasoning progress and aggressively penalizes redundant or uninformative tool executions.

\paragraph{Value-Guided Step Instruction.}
These step-level values directly dictate the search tree's topological expansion. Let $\tau$ be a predefined confidence threshold for termination:
\begin{itemize}
    \item \textbf{Answer Generation} ($V(n') \ge \tau$): Sufficient evidence has been gathered, instructing the model to terminate the current path and synthesize a final answer.
    \item \textbf{Search Widening} ($V(n') \le V(n)$): The recent action yielded zero or negative information gain. To avoid stalled trajectories, the agent explores laterally by proposing divergent thoughts or tool calls.
    \item \textbf{Search Deepening} ($V(n) < V(n') < \tau$): The action yielded positive information gain but remains below the terminal threshold. The branch is promising, instructing the model to deepen the search via subsequent reasoning steps.
\end{itemize}
This structural guidance ensures the agent efficiently alternates between depth-first exploitation of promising chains and breadth-first exploration of uncertain pathways.

\subsection{Budget-Aware Node Expansion}

The core mechanism of the search is a dynamic, budget-aware node selection strategy. The search tree is initialized with the original question as the root node, assigned a minimum starting value. At each step, the framework samples a node to expand from the pool of existing candidate nodes, strictly excluding terminal answer nodes. 

\paragraph{Exploration to Exploitation Shift.}
Standard Upper Confidence Bound (UCB) formulas are well-suited for asymptotic exploration but fundamentally assume unbounded horizons, failing to adapt to strict, depleting resource constraints. To address this and prevent catastrophic budget exhaustion on unpromising paths, we introduce a budget-aware stochastic sampling mechanism. Let $r_t \in (0, 1]$ represent the effective remaining budget ratio at step $t$, defined as the minimum between the remaining tool budget ratio and the remaining token budget ratio:
\begin{equation}
r_t=\min\left(\frac{b_{\text{tool}, t}}{B_{\text{tool}}},\frac{b_{\text{token}, t}}{B_{\text{token}}}\right)
\end{equation}
We define a dynamic scaling exponent $\alpha_t$, which is inversely proportional to this limiting budget ratio:
\begin{equation}
\alpha_t=\frac{1}{r_t}
\end{equation}
For each candidate node $n_i \in \mathcal{N}_{\text{cand}}$ with an accumulated state value $V(n_i)$, we compute an unnormalized selection weight $w_{n_i}$ via a power-based scaling function:
\begin{equation}
    w_{n_i} = V(n_i)^{\alpha_t}
\end{equation}
The probability $\mathbb{P}(n_i)$ of selecting node $n_i$ for expansion is determined by normalizing these weights across all $N$ candidate nodes in the pool:
\begin{equation}
\label{eq: sampling}
  \mathbb{P}(n_i) = \frac{w_{n_i}}{\sum_{j=1}^{N} w_{n_j}}
\end{equation}

This formulation induces a budget-dependent shift in the agent’s behavior. 
When the budget is abundant ($r_t \approx 1$), $\alpha_t \approx 1$, yielding a sampling distribution roughly proportional to the raw node values and promoting exploration of the search space. 
As the budget decreases ($r_t \to 0$), $\alpha_t$ increases, magnifying value differences and concentrating probability mass on higher-valued nodes. 
In the limit, the distribution approaches a near-deterministic selection of the highest-valued node. 
Thus, the policy transitions from broad exploration in early stages to increasingly exploitative behavior as resources become scarce.

\paragraph{Node Expansion and Global Backpropagation.}

Once a node $n$ is selected, the generator synthesizes an action that interacts with the environment, producing a new observation and creating a corresponding child node $n'$. The prompt-based critic immediately evaluates this new state to assign an initial value $V(n')$. 
% \Jiajin{In the previous subsection, you typically use $i$ to denote the note. Please align.}

While the critic provides localized evaluations, the discovery of a terminal answer provides strong global signals about the validity of the preceding trajectory. Therefore, immediately after the first terminal answer node is generated, the framework triggers a global value update across the entire tree. For every subsequent step, the overall value $V(n)$ of a node $n$ is recursively updated bottom-up as follows:
\begin{equation}
    V(n) \leftarrow \frac{V(n) + \sum_{n_i \in \mathcal{N}_{\text{child}}(n)} V(n_i)}{1 + |\mathcal{N}_{\text{child}}(n)|}
\end{equation}
where $\mathcal{N}_{\text{child}}(n)$ represents the set of all generated child nodes for node $n$. In this update rule, the $V(n)$ term in the numerator relies on the node's original, unmodified critic evaluation. This operation smooths the initial local scores by incorporating the empirical success of downstream branches, ensuring that paths leading to multiple strong candidate answers are prioritized over isolated high-value nodes.

\subsection{Search Termination and Answer Extraction}

To maximize exploration and ensure the highest quality response, the search process generally avoids early termination upon discovering a single valid trajectory. Instead, the framework runs continuously to explore alternative reasoning paths until the hard budget constraints are reached.

\paragraph{Budget Backstop Mechanism.}
To guarantee a valid output under strict resource constraints and prevent catastrophic failure, we introduce a preemptive forced generation mechanism. During the search, if the agent fully exhausts the tool budget ($b_{\text{tool}, t} = 0$) or the remaining token budget ratio falls below a critical threshold $\eta$ (i.e., $\frac{b_{\text{token}, t}}{B_{\text{token}}} \le \eta$) without having successfully generated at least one proposed terminal answer (i.e., the answer set $\mathcal{A} = \emptyset$), the framework immediately halts standard expansion. It then selects the incomplete leaf node with the highest accumulated value:
\begin{equation}
    n^* = \arg\max_{n \in \mathcal{N}_{\text{leaf}}} V(n)
\end{equation}
From this optimal incomplete state $n^*$, the framework issues a deterministic instruction forcing the generator to synthesize a final answer based exclusively on the context and evidence gathered up to that point.

\paragraph{Final Answer Extraction.}
If the exhaustive search completes normally—meaning at least one valid trajectory successfully reached a terminal answer before the absolute budget limits ($b_{\text{tool}, t} = 0$ or $b_{\text{token}, t} = 0$) were fully depleted—the framework evaluates the proposed answer set $\mathcal{A}$. It selects the terminal node with the highest overall backpropagated value and extracts the final predicted answer $\hat{y}$ to be returned directly to the user.

\subsection{Theoretical Analysis: Convergence to Termination}

In this section, we provide a theoretical guarantee that, given a sufficiently large computational budget, the BAVT framework converges to a terminal answer by satisfying the generation condition ($V(s_t) \ge \tau$) with high probability. The result relies on three structural assumptions regarding the search space and the evaluator model.

\begin{assumption}[Strict Positive Progress] \label{assump:progress}
There exists at least one optimal ``oracle'' trajectory $\mathcal{P}^*$. Each deepening step along $\mathcal{P}^*$ yields a deterministic minimum information gain: $\Delta_t \ge \delta > 0$.
\end{assumption}

\begin{assumption}[Linearity before Threshold] \label{assump:linearity}
The bounding function $\Phi(\cdot)$ preserves the additive property $\Phi(x) = x$ for all $x \le v_{\textrm{max}}$, and the termination threshold satisfies $\tau < v_{\textrm{max}}$.
\end{assumption}

\begin{assumption}[Bounded Candidate Pool] \label{assump:bounded_pool}
The maximum number of active candidate nodes in the pool $\mathcal{N}_{\text{cand}}$ is strictly bounded by a constant $N_{\textrm{max}}$. Furthermore, all accumulated node values are bounded within $[v_{\textrm{min}}, v_{\textrm{max}}]$ (where $v_{\textrm{min}} > 0$), and the dynamic scaling exponent is bounded by a maximum value $\alpha_{\textrm{max}}$.
\end{assumption}

\begin{remark}
% These assumptions naturally align with the BAVT mechanism. Assumption~\ref{assump:progress} ensures that the oracle path is discoverable and makes measurable progress, effectively guaranteed by the search-widening pruning\dwl{section 3.3 we only use search-widening notaion, so maybe align them to avoid confusion } ($\Delta_t \le 0$). 

These assumptions naturally align with the BAVT mechanism. Assumption~\ref{assump:progress} ensures that the oracle path is discoverable and makes measurable progress, which is facilitated by the search widening mechanism ($\Delta_t \le 0$).
Assumption~\ref{assump:linearity} acts as a standard clipping mechanism. Assumption~\ref{assump:bounded_pool} prevents the infinite dilution of sampling probabilities as the tree expands (e.g., maintained via a priority queue).This assumption is mild in practice, as the candidate pool is naturally bounded by the limited number of sampled nodes.
% \dwl{This assumption is mild in practice, as the candidate pool is naturally bounded by the limited number of sampled nodes.} 
Consequently, the conditional probability of sampling any specific node in the candidate pool is strictly lower-bounded by $p_{\textrm{min}} = \frac{v_{\textrm{min}}^{\alpha_{\textrm{max}}}}{N_{\textrm{max}} \cdot v_{\textrm{max}}^{\alpha_{\textrm{max}}}} > 0$.
\end{remark}

\begin{theorem}[Probabilistic Convergence to Answer Generation]
Given Assumptions \ref{assump:progress}-\ref{assump:bounded_pool}, for any arbitrarily small failure probability $\epsilon > 0$, there exists a finite budget bound $B$ such that the BAVT framework successfully generates a node satisfying $V(s_t) \ge \tau$ with probability at least $1 - \epsilon$.
\end{theorem}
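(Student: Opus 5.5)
The plan is to reduce the statement to a geometric-trials bound along the oracle trajectory $\mathcal{P}^*$ of Assumption~\ref{assump:progress}. First, count the number of deepening steps needed. The root starts at value $v_0 \ge v_{\textrm{min}}$, and each step along $\mathcal{P}^*$ adds $\Delta_t \ge \delta$. Assumption~\ref{assump:linearity} says $\Phi$ acts as the identity below $v_{\textrm{max}} > \tau$, so after $k$ oracle steps the node on $\mathcal{P}^*$ has value at least $v_0 + k\delta$ as long as this stays below $\tau$. Hence at most $K = \lceil (\tau - v_0)/\delta \rceil$ oracle expansions are needed to produce a node with $V \ge \tau$. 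This node is exactly the answer-generation condition.

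Next, use the lower bound from the Remark. At any expansion step before success, the deepest oracle node reached so far is non-terminal, because its value is $<\tau$. It therefore sits in $\mathcal{N}_{\text{cand}}$. Assume, as part of Assumption~\ref{assump:progress}, that expanding it produces the next oracle child. Then, conditionally on the whole history, this step advances the oracle frontier with probability at least $p_{\textrm{min}} = v_{\textrm{min}}^{\alpha_{\textrm{max}}}/(N_{\textrm{max}} v_{\textrm{max}}^{\alpha_{\textrm{max}}})$. This uses $w \ge v_{\textrm{min}}^{\alpha_{\textrm{max}}}$, which holds because $v \le v_{\textrm{max}}$ and all values lie in $[v_{\textrm{min}},v_{\textrm{max}}]$. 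It also uses a total weight of at most $N_{\textrm{max}} v_{\textrm{max}}^{\alpha_{\textrm{max}}}$.

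I will need to check the direction of the exponent inequality, since values below 1 shrink under larger exponents. I would normalize to values in $(0,1]$ so that $v^{\alpha} \ge v_{\textrm{min}}^{\alpha_{\textrm{max}}}$ for $\alpha \le \alpha_{\textrm{max}}$, and bound the denominator by $N_{\textrm{max}}$. Either way, a uniform positive $p_{\textrm{min}}$ is available, and I take the Remark's expression as given.

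Now a stochastic domination argument: the number of oracle advances after $T$ expansions dominates a $\mathrm{Binomial}(T, p_{\textrm{min}})$ variable. Formally, couple each step with an independent uniform variable, or note that the successes form a process whose conditional success probability is always at least $p_{\textrm{min}}$. Failure means fewer than $K$ successes in $T$ steps, so its probability is at most $\Pr[\mathrm{Bin}(T,p_{\textrm{min}}) < K]$. By a Chernoff bound this is at most $\exp(-(Tp_{\textrm{min}}-K)^2/(2Tp_{\textrm{min}}))$ for $T > K/p_{\textrm{min}}$. Alternatively, split the $T$ steps into $K$ blocks of length $L$ and use a union bound with $K(1-p_{\textrm{min}})^L \le \epsilon$, which gives $L = \lceil \ln(K/\epsilon)/p_{\textrm{min}} \rceil$ and $T = KL$. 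I prefer the block version for an explicit bound.

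Finally, convert $T$ expansion steps into a budget. Each expansion uses at most one tool call and at most some per-step token cap $c_{\max}$, which I would state as implicit in the setup. Setting $B_{\text{tool}} \ge T$ and $B_{\text{token}} \ge T c_{\max}/(1-\eta)$ ensures neither the hard limits nor the backstop threshold $\eta$ fire before $T$ expansions. The bounded $\alpha_{\textrm{max}}$ is consistent with this, since $r_t$ stays bounded away from 0 during these steps.

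The main obstacle is the second step. It requires two things: that the oracle frontier node is never evicted from the bounded pool, and that expanding it actually yields the oracle child. I would handle both by reading Assumption~\ref{assump:progress} as guaranteeing that every expansion of an oracle node produces its oracle successor, with that successor retained in the pool. I would state this reading explicitly.
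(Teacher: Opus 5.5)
Your proposal is correct and follows the paper's proof essentially step for step. It uses the same $K=\lceil(\tau-V(s_0))/\delta\rceil$, the same history-uniform bound $\mathbb{P}(X_t=1\mid\mathcal{F}_{t-1})\ge p_{\min}$ on sampling the oracle frontier, stochastic domination by a $\mathrm{Binomial}(M,p_{\min})$ count, and the same Chernoff lower-tail bound. Your block/union-bound variant is also valid, though it needs $O(K\log(K/\epsilon)/p_{\min})$ expansions instead of $O((K+\log(1/\epsilon))/p_{\min})$. Your explicit reading of Assumption~\ref{assump:progress} and your treatment of the $\eta$ backstop spell out what the paper leaves implicit.

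Two asides need correcting. First, the exponent-direction worry disappears because $\mathbb{P}(n_i)\ge (v_{\min}/v_{\max})^{\alpha_t}/N_{\max}\ge (v_{\min}/v_{\max})^{\alpha_{\max}}/N_{\max}$ for any positive value range. Second, $\alpha_t\le\alpha_{\max}$ must come from Assumption~\ref{assump:bounded_pool}, or from over-provisioning both budgets by a constant factor, rather than from your stated budget: with $B_{\text{tool}}=T$ the ratio $r_t$ can fall to $1/T$.
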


\begin{proof}
By Assumptions \ref{assump:progress} and \ref{assump:linearity}, reaching the threshold $\tau$ from the initial state $V(s_0)$ requires at most $K$ steps along $\mathcal{P}^*$, where:
\begin{equation}
    K = \left\lceil \frac{\tau - V(s_0)}{\delta} \right\rceil
\end{equation}
Completing $K$ steps guarantees $V(s_K) \ge V(s_0) + K\delta \ge \tau$, triggering termination deterministically. 

Let $c_{\max}$ be the maximum resource cost per expansion. An initial budget $B$ guarantees $M = \lfloor B / c_{\max} \rfloor$ valid expansion steps. Let $\mathcal{F}_{t-1}$ denote the filtration generated by the search history up to step $t-1$. Because the tree is built adaptively, we define the \emph{frontier node} $n^*_t$ of $\mathcal{P}^*$ as the deepest expanded node along $\mathcal{P}^*$ that resides in the active candidate pool at step $t$. 

Let $X_t \in \{0, 1\}$ be the indicator variable denoting whether this frontier node $n^*_t$ is successfully sampled for expansion at step $t$. By Assumption \ref{assump:bounded_pool} and the remark above, regardless of the search history, we have $\mathbb{P}(X_t = 1 \mid \mathcal{F}_{t-1}) \ge p_{\min} > 0$.

Because this conditional success probability is strictly bounded from below by a constant, we can construct an auxiliary sequence of i.i.d. random variables $Y_1, \dots, Y_M \sim \text{Bernoulli}(p_{\textrm{min}})$. Via a standard stochastic coupling argument, the cumulative sum $\sum_{t=1}^M X_t$ stochastically dominates $\sum_{t=1}^M Y_t$.

Consequently, the probability of failing to achieve the required $K$ successes along the oracle path is bounded by the lower tail of the binomial distribution of $Y_t$. Provided $M > K / p_{\textrm{min}}$, we apply the Chernoff bound. Let $x := M p_{\min}$, yielding:
\begin{equation}
    \mathbb{P}\left(\sum_{t=1}^M X_t < K\right) \le \mathbb{P}\left(\sum_{t=1}^M Y_t < K\right) \le \exp\left( - \frac{(x - K)^2}{2x} \right)
\end{equation}
To ensure this failure probability is at most $\epsilon$, it suffices to enforce $\exp\left( - \frac{(x - K)^2}{2x} \right) \le \epsilon$, which equates to $\frac{(x - K)^2}{2x} \ge \log\frac{1}{\epsilon}$. Solving for $x$, we require:
\begin{equation}
    x \ge K + \sqrt{2K\log\frac{1}{\epsilon}} + 2\log\frac{1}{\epsilon}
\end{equation}
Substituting $x = M p_{\min}$ back into the inequality, the required number of expansion steps $M$ is bounded by:
\begin{equation}
    M \ge \frac{1}{p_{\min}}\left( K + \sqrt{2K\log\frac{1}{\epsilon}} + 2\log\frac{1}{\epsilon}\right)
\end{equation}
Since $M = \lfloor B / c_{\max} \rfloor$, any initial budget $B \ge c_{\max} M$ explicitly guarantees $\mathbb{P}\left(\sum_{t=1}^M X_t < \left\lceil \frac{\tau - V(s_0)}{\delta} \right\rceil\right) \le \epsilon$. Hence, the framework reaches a termination node with probability at least $1 - \epsilon$.
\end{proof}

\section{Experiments}

\subsection{Experimental Setup}

\paragraph{Datasets.}
To rigorously evaluate our framework, we select four complex multi-hop reasoning benchmarks: HotpotQA~\citep{yang2018hotpotqa}, 2WikiMultihopQA~\citep{ho2020constructing}, MuSiQue~\citep{trivedi2022musique}, and Bamboogle~\citep{press2023measuring}. These datasets are specifically chosen because they strictly require sequential tool use and dynamic information gathering. The questions within these benchmarks are designed to be unanswerable relying solely on the internal parametric weights of the model, thereby serving as an ideal testbed for evaluating resource-constrained environment interactions.

\paragraph{Models Evaluated.}
To demonstrate that our framework is versatile and model-agnostic, we conduct our evaluations across two distinct model architectures:
\begin{itemize}
    \item \textbf{GPT-OSS-20B~\citep{agarwal2025gpt}:} A specialized reasoning model featuring robust internal chain-of-thought and native tool-use capabilities.
    \item \textbf{Qwen3-30B-A3B-Instruct-2507~\citep{yang2025qwen3}:} A high-capacity instruction-following model optimized for general-purpose tasks. 
\end{itemize}

\paragraph{Baseline Formulation.}
We compare our approach against a parallel sampling baseline, formulated as budget-constrained majority voting. To ensure a strictly fair comparison under identical resource constraints, this baseline executes $K$ independent reasoning trajectories concurrently. Let $\mathcal{T} = \{\tau_1, \tau_2, \dots, \tau_K\}$ denote the set of successfully completed trajectories, where each trajectory $\tau_i$ incurs a resource cost $\mathcal{C}(\tau_i)$, measured in tool calls and generated tokens. The generation process continues, maximizing the total number of trajectories $K$, until the cumulative cost exhausts the predefined budget limit $B$:
\begin{equation}
    \sum_{i=1}^{K} \mathcal{C}(\tau_i) \le B
\end{equation}
Once the budget $B$ is fully consumed, let $A(\tau_i)$ denote the terminal answer extracted from trajectory $\tau_i$. The final predicted answer $\hat{y}$ is determined by applying a majority voting function across all valid trajectories, formally defined as:
\begin{equation}
    \hat{y} = \arg\max_{y} \sum_{i=1}^{K} \mathbb{I}(A(\tau_i) = y)
\end{equation}
where $\mathbb{I}(\cdot)$ is the indicator function. This formalizes a rigorous scaling baseline that natively consumes the exact same computational budget as our proposed dynamic tree search.

\paragraph{Budget Configurations.}
To systematically analyze the performance-efficiency trade-off, we define three distinct budget levels for our evaluations. Each tier specifies a strict maximum limit for both the tool call budget ($B_{\text{tool}}$) and the token generation budget ($B_{\text{token}}$). Because reasoning models produce longer internal traces, their token budgets are scaled accordingly:
\begin{itemize}
    \item \textbf{Low Budget:} 5 maximum tool calls. Token limit is 2000 for the reasoning model and 1000 for the instruct model.
    \item \textbf{Middle Budget:} 10 maximum tool calls. Token limit is 4000 for the reasoning model and 2000 for the instruct model.
    \item \textbf{High Budget:} 20 maximum tool calls. Token limit is 8000 for the reasoning model and 4000 for the instruct model.
\end{itemize}

\paragraph{Budget-Aware Planning.}
To ensure the search trajectory respects the resource constraints of the environment, we execute a single, high-level planning step at the root node, as detailed in Section~\ref{subsec:planner_prompt}. We inject a budget hint directly into the system prompt, instructing the model to generate a strictly abstract, fact-free outline alongside an estimation of required tool calls. This plan is generated only once and is appended to the shared context of all subsequent nodes.

\paragraph{Implementation Details.}
We implement our method within the Inspect AI~\citep{ukaisi2024inspect} framework. For retrieval-augmented tool executions, we follow Search-R1~\citep{jin2025search}, utilizing the 2018 Wikipedia dump~\citep{karpukhin2020dense} as the foundational knowledge corpus. We employ the E5 model~\citep{wang2022text} as our dense retriever, fixing the number of retrieved passages to exactly five for each search query. Finally, to optimize generation dynamics, we apply the default hyperparameters tailored to each specific architecture according to their official guidelines. Specifically, for the Qwen instruct model, we utilize its default settings: a temperature of 0.7, a top-$p$ of 0.8, and a top-$k$ of 20. Conversely, for the GPT-OSS reasoning model, we adopt its respective defaults, setting both the temperature and top-$p$ to 1.0, while setting top-$k$ to 0.

\subsection{Main Results}

We present the empirical results of our framework compared against the parallel sampling baseline across the three predefined budget tiers. Figure~\ref{fig:oss_results} details the performance of the OSS-20B reasoning model, while Figure~\ref{fig:qwen_results} details the performance of the Qwen3-30B instruct model. Performance is measured using Exact Match (EM) and F1 scores.

\begin{figure}[tbp]
    \centering
    \includegraphics[width=\linewidth]{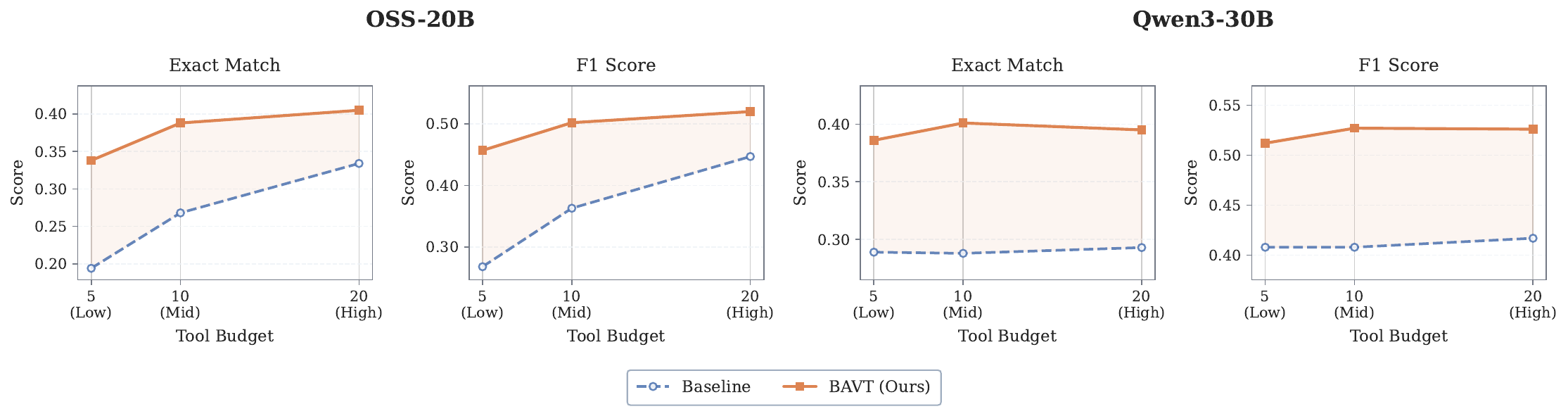}
    \vspace{-0.5cm}
    \caption{Average performance-efficiency trade-off across the four evaluated multi-hop QA benchmarks for OSS-20B and Qwen3-30B. BAVT operating under strict \textit{Low} budget constraints (5 calls) consistently rivals or surpasses the baseline's \textit{High} budget performance (20 calls), demonstrating that intelligent resource management fundamentally outperforms $4\times$ brute-force compute scaling.}

    \label{fig:budget_comparison}
\end{figure}

\paragraph{Spend Less, Reason Better.}
Across all four multi-hop datasets, the proposed BAVT framework consistently outperforms the parallel sampling baseline under identical constraints. Figure~\ref{fig:budget_comparison} illustrates the average scaling dynamics across both model architectures over the four multi-hop QA benchmarks. Across both of these distinct scaling behaviors, BAVT maintains a strictly superior performance frontier at every budget increment. Most notably, BAVT under strict low-budget constraints surpasses baseline performance at $4\times$ the resource allocation. For example, using the OSS-20B model, BAVT achieves a 0.338 average Exact Match (EM) at the \textit{Low} tier (5 tool calls), eclipsing the baseline's \textit{High} tier peak of 0.334 EM (20 tool calls). This explicitly establishes that intelligent budget management fundamentally outperforms brute-force compute scaling. By consistently achieving higher accuracy with fewer resources, the trends visually reinforce that fine-grained, value-guided tool allocation dominates unguided test-time scaling—allowing the system to reason better while spending significantly less.

\begin{figure}[h]
    \centering
    \includegraphics[width=\textwidth]{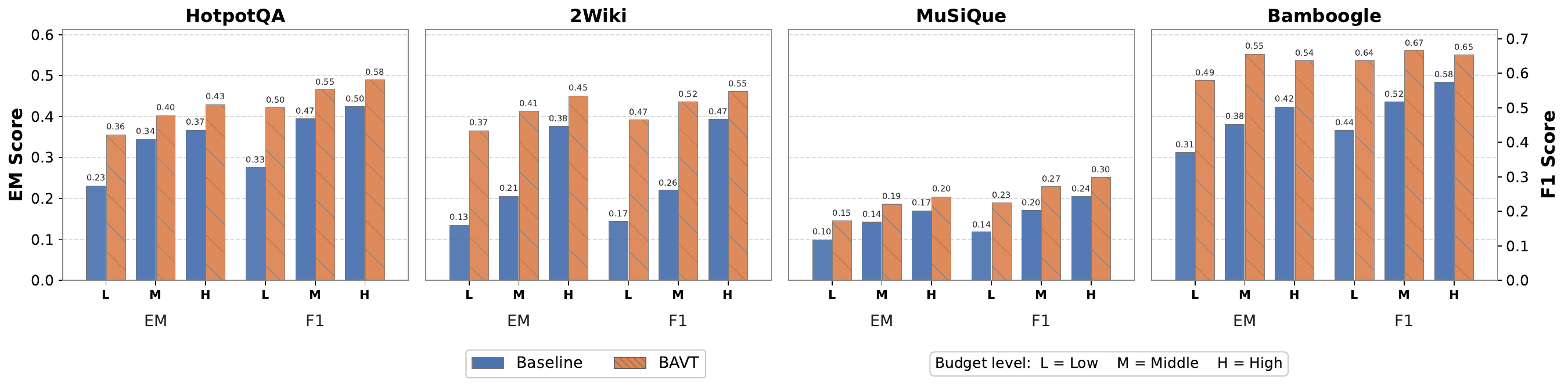}
    \vspace{-0.5cm}
    \caption{Performance of the OSS-20B reasoning model on multi-hop QA benchmarks. BAVT achieves strictly superior performance and resource efficiency compared to the baseline across all datasets and budget constraints.}
    \label{fig:oss_results}
\end{figure}

\paragraph{Reasoning Model Amplification.} 
For the OSS-20B model (Figure~\ref{fig:oss_results}), baseline performance scales effectively with increased budgets (average EM rises from 0.194 to 0.334). This validates that parallel majority voting benefits models equipped with multi-stage internal reasoning traces. Because these reasoning processes introduce high variance across different trajectories, scaling up rollouts helps identify a robust consensus solution. However, a major vulnerability remains: once a reasoning model generates a flawed intermediate premise, it tends to confidently justify the error, wasting substantial token and tool budgets exploring unpromising directions. The baseline algorithm is completely blind to this, allowing doomed trajectories to run to completion. 

BAVT, by contrast, acts as a dynamic regularizer. The step-level critic identifies factual drifts immediately after an environment interaction, while the budget-aware exponent ($\alpha_t$) dynamically truncates exploratory rabbit holes to force the exploitation of the most factually grounded path. By pruning errors before they compound, BAVT serves as a powerful performance multiplier. Across all three budget tiers, BAVT consistently outperforms the baseline by significant margins. This advantage is most pronounced under tight constraints: at the Low budget tier, BAVT elevates the average EM from 0.194 to 0.338. Remarkably, this surpasses the baseline's High budget performance (0.334 EM) while utilizing only one-quarter of the computational resources. This conclusively demonstrates that BAVT successfully balances exploration and exploitation, preventing resource waste on dead ends even when the available budget is severely restricted.

\begin{figure}[h]
    \centering
    \includegraphics[width=\textwidth]{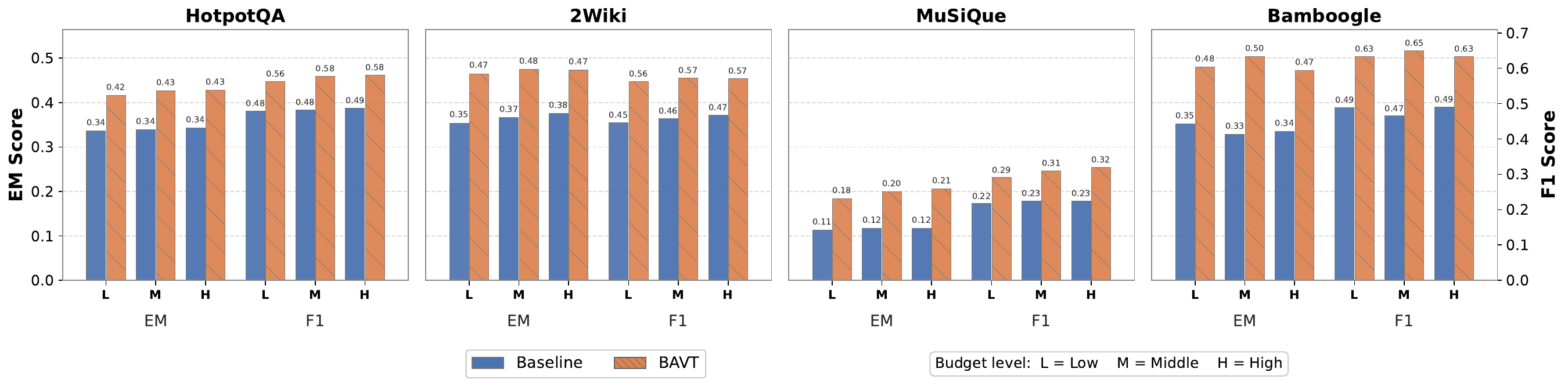}
    \vspace{-0.5cm}
    \caption{Performance of the Qwen3-30B instruct model across multi-hop QA benchmarks. BAVT achieves strictly superior performance, effectively raising the baseline’s ceiling.}
    \label{fig:qwen_results}
\end{figure}

\paragraph{Instruct Model Plateau.} 
Conversely, as shown in Figure~\ref{fig:qwen_results}, the baseline methodology largely fails to translate increased computational budgets into meaningful performance gains for the Qwen3-30B instruct model. The baseline average EM stagnates entirely, moving only from 0.289 (Low) to 0.293 (High). This plateau is fundamentally driven by the low generation variance and inherent overconfidence of non-reasoning architectures. Because instruction-tuned models are optimized to output definitive responses in a single pass, they suffer from severely limited generation diversity when confronted with multi-hop ambiguity. Consequently, merely increasing the parallel rollout budget results in mode collapse; the model exhaustively repeats the exact same failure trajectories rather than exploring alternative hypotheses. BAVT systematically overcomes this architectural limitation through its value-guided structural instructions. When a trajectory yields zero information gain, BAVT's "search widening" mechanism explicitly forces the instruct model to break out of its mode collapse and propose lateral, divergent tool calls. By artificially inducing the systematic exploration that instruct models lack natively, BAVT successfully breaks the baseline performance ceiling, achieving a dominant 0.386 average EM even at the Low budget tier.

However, it is also crucial to note that while BAVT significantly elevates the overall performance floor, its own scaling curve eventually flattens at higher budgets. This highlights a fundamental capacity bottleneck in instruct models: while BAVT can successfully orchestrate the search tree and retrieve the necessary evidence through forced exploration, the base model eventually lacks the complex reasoning capacity to synthesize massive amounts of accumulated multi-document context into a correct answer. Unlike reasoning models that utilize long internal traces to iteratively resolve ambiguity, instruct models reach a hard parameter-bound limit where simply providing more budget and retrieved context no longer translates into task success.

\paragraph{Dataset-Specific Observations.}
The performance gap is most pronounced on highly complex datasets requiring rigorous cross-document reasoning. On MuSiQue, standard linear trajectories are highly vulnerable to dense retrieval noise, causing the Qwen3-30B baseline to flatline at 0.12 EM across higher budgets. BAVT's step-level verification effectively filters this noise, pushing the EM to 0.21. On 2Wiki, BAVT demonstrates profound resource efficiency: the OSS-20B model achieves 0.37 EM under strict Low-budget constraints, nearly matching the baseline's High-budget peak (0.38 EM). This confirms that actively verifying intermediate sub-claims is vastly more compute-efficient than relying on unguided parallel rollouts. Finally, on Bamboogle, BAVT yields massive absolute improvements even at the Low tier (0.31 $\rightarrow$ 0.49 EM for OSS-20B). However, performance plateaus at the Middle tier (0.55 EM) as the budget scales. Because Bamboogle features relatively simpler queries, BAVT extracts the model's maximum reasoning capacity almost immediately, rendering further resource expenditure unnecessary.

\subsection{Ablation Studies}

\begin{table}[h]
\centering
\caption{Ablation study isolating the impact of the Tree Structure, Step-Level Value, and Budget-Aware Node Selection using the OSS-20B model at the Middle budget tier. Checkmarks (\checkmark) indicate an active component, while crosses ($\times$) indicate an inactive component.}
\label{tab:ablation}
\resizebox{0.85\textwidth}{!}{%
\begin{tabular}{ccc|cc|cc|cc|cc|cc}
\toprule
\multicolumn{3}{c|}{\textbf{Components}} & \multicolumn{2}{c|}{\textbf{HotpotQA}} & \multicolumn{2}{c|}{\textbf{2Wiki}} & \multicolumn{2}{c|}{\textbf{MuSiQue}} & \multicolumn{2}{c|}{\textbf{Bamboogle}} & \multicolumn{2}{c}{\textbf{AVG}} \\
\textbf{Tree} & \textbf{Value} & \textbf{Budget} & \textbf{EM} & \textbf{F1} & \textbf{EM} & \textbf{F1} & \textbf{EM} & \textbf{F1} & \textbf{EM} & \textbf{F1} & \textbf{EM} & \textbf{F1} \\
\midrule
$\times$ & $\times$ & $\times$ & 0.344 & 0.469 & 0.205 & 0.261 & 0.142 & 0.203 & 0.381 & 0.517 & 0.268 & 0.363 \\
\midrule
\checkmark & $\times$ & $\times$ & 0.243 & 0.340 & 0.124 & 0.163 & 0.100 & 0.140 & 0.392 & 0.478 & 0.215 & 0.280 \\
\checkmark & \checkmark & $\times$ & 0.356 & 0.481 & 0.265 & 0.332 & 0.158 & 0.224 & 0.456 & 0.575 & 0.309 & 0.403 \\
\midrule
\checkmark & \checkmark & \checkmark & \textbf{0.402} & \textbf{0.552} & \textbf{0.413} & \textbf{0.517} & \textbf{0.186} & \textbf{0.272} & \textbf{0.552} & \textbf{0.666} & \textbf{0.388} & \textbf{0.502} \\
\bottomrule
\end{tabular}%
}
\end{table}

To rigorously isolate the contributions of our proposed architecture, we conduct an incremental ablation study using the OSS-20B reasoning model at the Middle budget tier. We deconstruct the full BAVT framework into its three foundational components: Tree Structure, Step-Level Value, and Budget-Aware Node Selection. As detailed in Table~\ref{tab:ablation}, we evaluate configurations ranging from the standard parallel sampling baseline (where all structural components are disabled) to the full BAVT framework.

\paragraph{The Insufficiency of Random Tree Search.}
To isolate the effect of the tree structure itself, we test a basic configuration where the next expansion node is sampled uniformly at random from the candidate pool. As observed in Table~\ref{tab:ablation}, simply organizing the reasoning process into a search tree without intelligent guidance actively degrades performance. This tree-only configuration averages an EM of 0.215, significantly underperforming the baseline (0.268). In a vast, multi-hop search space, unguided tree expansion fragments the limited computational budget across less promising paths and dead ends, preventing the model from deeply investigating high-value trajectories.

\paragraph{The Impact of Step-Level Value.}
Integrating the Step-Level Value verifier yields a dramatic improvement, validating our step-wise value estimation strategy. Equipped with fine-grained evaluations for each reasoning state, the generator is directly guided by critic feedback, enabling the framework to reliably identify and prioritize promising directions for exploration via Equation~\ref{eq: sampling}. Activating this component alone elevates the average EM to 0.309, comfortably surpassing both the random tree configuration and the standard parallel baseline. However, in the absence of explicit budget-aware control, specifically lacking the dynamic scaling exponent $\alpha_t$, the framework remains constrained. It cannot systematically shift its strategy from broad exploration to greedy exploitation as computational resources deplete.

\paragraph{The Necessity of Budget-Aware Node Selection.}
While static step-level values reliably identify high-quality reasoning states, they fail to account for environmental resource constraints. Guided solely by static values, a reasoning model tends to endlessly deepen promising paths without widely explore potential paths. Activating Budget-Aware Node Selection ($\alpha_t$) resolves this critical bottleneck. By inversely tying the selection exponent to the remaining budget, BAVT forces the agent to aggressively transition into greedy exploitation as resources deplete. This final component elevates the framework to its peak performance (0.388 AVG EM), ensuring that high-quality trajectories identified by the verifier are successfully driven to completion rather than prematurely truncated by budget exhaustion.

\section{Conclusion}

In this work, we formalized the critical challenge of resource allocation in autonomous agent inference and proposed the Budget-Aware Value Tree (BAVT) to address the severe inefficiencies of unconstrained test-time scaling. By modeling multi-hop reasoning as a dynamic search tree guided by an LLM-driven step-level critic, BAVT introduces a principled node selection mechanism that seamlessly transitions the agent's policy from broad exploration to greedy exploitation as computational budgets deplete. Extensive empirical evaluations across four multi-hop QA benchmarks demonstrate that this training-free framework consistently achieves a vastly superior performance-efficiency trade-off compared to robust parallel sampling baselines. Most notably, BAVT operating under strict low-budget constraints frequently surpasses the high-budget performance of standard methods, successfully mitigating compounding errors in reasoning models and breaking the mode-collapse plateau inherent to instruction-tuned architectures. Ultimately, BAVT establishes a highly effective and adaptable paradigm for maximizing the reliability of autonomous agents under the strict, real-world resource constraints necessary for practical deployment.

\section{Limitations and Future Work}

While the Budget-Aware Value Tree (BAVT) demonstrates substantial improvements in resource-efficient reasoning, our framework has several limitations that present promising avenues for future research.

\paragraph{Inference Overhead of the Critic.} 
While BAVT significantly reduces redundant tool executions and prevents runaway generation trajectories, the dual-role prompting mechanism introduces inherent inference overhead. Evaluating the progress of every intermediate step using the main LLM backbone consumes a portion of the overall token budget. Although this trade-off is empirically net-positive for complex multi-hop tasks, future work should explore training lightweight, specialized Process Reward Models (PRMs) or training a dedicated value head directly on top of the base model. This would replace the prompt-based critic, substantially reducing the token footprint and latency associated with step-level verification.

\paragraph{Heterogeneous Tools and Asymmetric Costs.} 
Our current problem formulation and empirical evaluations primarily focus on a single external tool (web search) assigned a uniform, discrete cost ($\mathcal{C}_{\text{tool}} = 1$). In real-world deployments, agents must orchestrate a diverse arsenal of tools, such as code interpreters, database queries, and specialized APIs—that incur vastly asymmetric financial costs, execution latencies, and rate limits. A critical next step is extending the budget-aware node selection mechanism to incorporate dynamic, multi-dimensional pricing matrices. This would require the agent to learn complex trade-offs, such as balancing cheap, low-fidelity heuristic tools against expensive, highly accurate deterministic APIs under strict resource constraints.

\paragraph{Extension to Long-Horizon Agent Tasks.} 
The scope of our evaluation is currently centered on knowledge-intensive multi-hop question answering. However, autonomous agents are increasingly deployed in open-ended, long-horizon interactive environments. Adapting BAVT to handle tasks such as complex browser manipulation (e.g. Browsecomp~\citep{wei2025browsecomp}) and computer control benchmarks (e.g., OSWorld~\citep{xie2024osworld} or WebArena~\citep{zhou2023webarena}) is a natural progression. These environments often feature irreversible actions, partial observability, and highly delayed rewards, which will require extending our step-level value function to handle much more nuanced temporal credit assignment and state-space exploration.

\bibliographystyle{plainnat}
\bibliography{ref}

\newpage

\appendix

\section{BAVT Algorithm}

The complete execution flow of Budget-Aware Value Tree (BAVT) is detailed in Algorithm~\ref{alg:vbts}. Initialized with a root query and strict computational constraints, the framework iteratively constructs a search tree. At each step, a candidate node is sampled from a budget-annealed distribution, where the scaling exponent $\alpha_t$ smoothly transitions the agent's policy from broad exploration to greedy exploitation as resources deplete. Following node selection, the framework issues a deterministic, value-guided instruction (widen, deepen, or generate answer) to orchestrate the generator's action. The resulting state is then evaluated by the prompt-based critic to predict a residual information delta ($\Delta_t$), updating the node's accumulated value. The discovery of any terminal candidate answer immediately triggers a global, bottom-up value backpropagation to smooth local evaluations. This exhaustive search terminates strictly upon budget exhaustion, returning the extracted answer from the optimal leaf node.

\begin{algorithm}[hb]

\caption{Budget-Aware Value Tree (BAVT)}
\label{alg:vbts}
\begin{algorithmic}[1]
\Require Query $x$, Initial budgets $B_{\text{tool}}, B_{\text{token}}$
\Ensure Final predicted answer $\hat{y}$

\State Init: root $s_0 \leftarrow x$, budget $b_0 \leftarrow (B_{\text{tool}}, B_{\text{token}})$, $\mathcal{T}, \mathcal{N}_{\text{cand}} \leftarrow \{s_0\}$, $\mathcal{A} \leftarrow \emptyset$

\While{$b_{\text{tool}, t} > 0 \land b_{\text{token}, t} > 0$}
    \bavtblock{bavtred}{1. Budget-Aware Node Selection}
    \State $r_t \leftarrow \min(b_{\text{tool}, t}/B_{\text{tool}}, b_{\text{token}, t}/B_{\text{token}})$; $\alpha_t \leftarrow 1/r_t$
    \State Sample expansion node $n \sim P(n_i) \propto V(n_i)^{\alpha_t}$ from $\mathcal{N}_{\text{cand}}$

    \bavtblock{bavtgold}{2. Action Generation and Transition}
    \State Issue structural instruction $I_t$ based on $V(n)$ \Comment{Widen, Deepen, or Answer}
    \State Generate action $a_t \sim P(\cdot | n, I_t)$, observe transition $\mathcal{T}(n, a_t) \rightarrow n'$
    \State Update budget $b_{t+1} \leftarrow b_t - \mathcal{C}(a_t)$; append $n'$ to $\mathcal{T}$

    \bavtblock{bavtblue}{3. Value Critic and Backpropagation}
    \If{$n'$ is a terminal answer node}
        \State $V(n') \leftarrow V(n)$; $\mathcal{A} \leftarrow \mathcal{A} \cup \{n'\}$ \Comment{Inherit value, skip candidate pool}
    \Else
        \State Predict residual $\Delta_t$ for $n'$; update value $V(n') \leftarrow \Phi(V(n) + \Delta_t)$
        \State $\mathcal{N}_{\text{cand}} \leftarrow \mathcal{N}_{\text{cand}} \cup \{n'\}$
    \EndIf
    \If{$\mathcal{A} \neq \emptyset$}
        Backpropagate values bottom-up to update $V(n_k), \forall n_k \in \mathcal{T}$
    \EndIf

    \bavtblock{bavtgreen}{4. Backstop Answer Generation}
    \If{$\mathcal{A} = \emptyset \land (b_{\text{tool}, t} = 0 \lor b_{\text{token}, t}/B_{\text{token}} \le \eta)$}
        \State Synthesize answer $n^*_{\text{ans}}$ from $n^* = \arg\max_{n \in \mathcal{N}_{\text{cand}}} V(n)$ 
        \State $V(n^*_{\text{ans}}) \leftarrow V(n^*)$; $\mathcal{A} \leftarrow \mathcal{A} \cup \{n^*_{\text{ans}}\}$; $\mathcal{T} \leftarrow \mathcal{T} \cup \{n^*_{\text{ans}}\}$
    \EndIf
\EndWhile
\State \Return $\hat{y}$ extracted from optimal leaf $n^* = \arg\max_{n \in \mathcal{A}} V(n)$
\end{algorithmic}
\end{algorithm}

\section{Implementation Details and Reproducibility}

\subsection{Hyperparameter Configurations}
To ensure full reproducibility of the Budget-Aware Value Tree (BAVT) framework, we detail the core hyperparameters utilized across all experiments in Table~\ref{tab:hyperparams}. The dynamic search parameters were held constant across all datasets to demonstrate the generalization of the framework.

\paragraph{Value Scaling and Normalization.} 
To maintain stable exponentiation during the budget-annealed node sampling, we enforce a strict bounding on all state values. The prompt-based critic initially estimates the raw state value on a discrete scale from 1 to 10. This raw score is subsequently normalized by dividing by 10, constraining the final operational state value $V(n) \in [0.1, 1.0]$. Furthermore, the residual information delta ($\Delta_t$) predicted by the critic at each step is strictly clipped to the range $[-4, +4]$ before being applied to the parent node's score.

\begin{table}[h]
\centering
\caption{Core hyperparameters for the BAVT framework and LLM generation.}
\label{tab:hyperparams}
\begin{tabular}{lc}
\toprule
\textbf{Parameter} & \textbf{Value} \\
\midrule
Raw Critic Value Scale & $[1, 10]$ \\
Residual Delta Bounds ($\Delta_t$) & $[-4, +4]$ \\
Normalized State Value Range $V(n)$ & $[0.1, 1.0]$ \\
Terminal Confidence Threshold ($\tau$) & 0.8 \\
Budget Backstop Threshold ($\eta$) & 0.2 \\
Max Output Tokens per Call & 512 \\
\bottomrule
\end{tabular}
\end{table}

\subsection{Cost Estimation Analysis}

To contextualize the economic efficiency of the BAVT framework during deployment, we estimate the maximum theoretical cost per sample across our three predefined budget configurations.

\paragraph{Pricing and Assumptions.}
For our estimation, we assume the token generation budget ($B_{\text{token}}$) represents the maximum output tokens. In multi-hop reasoning tasks with retrieved contexts, input prompts are significantly longer than generated outputs; therefore, we conservatively estimate input token volume to be $10\times$ the output token volume. Token costs are calculated using standard API pricing per 1M tokens\footnote{API pricing rates were retrieved from \url{https://deepinfra.com}.}: GPT-OSS-20B (Reasoning Model) at \$0.03 input / \$0.14 output, and Qwen3-30B-A3B (Instruct Model) at \$0.08 input / \$0.28 output. The external search API is priced at a flat rate of \$0.005 per query.

As detailed in Table~\ref{tab:cost_estimation}, the analysis reveals a stark economic disparity between LLM inference and tool utilization. Across all budget tiers and model architectures, the cost of executing external search queries vastly dominates the total expenditure, consistently accounting for over 90\% of the cost per sample. This underscores the critical necessity of the BAVT step-level critic, which aggressively prunes redundant or uninformative tool calls to maximize budget efficiency in real-world deployments.

\begin{table}[h]
\centering
\caption{Estimated maximum cost per sample (in USD) across budget tiers. Token costs assume an input-to-output token ratio of 10:1. Search operations constitute the vast majority of the total cost.}
\label{tab:cost_estimation}
\resizebox{\textwidth}{!}{
\begin{tabular}{lcccccc}
\toprule
\textbf{Budget} & \textbf{Max Tool} & \textbf{Search} & \multicolumn{2}{c}{\textbf{Reasoning Model (GPT-OSS)}} & \multicolumn{2}{c}{\textbf{Instruct Model (Qwen)}} \\
\cmidrule(lr){4-5} \cmidrule(lr){6-7}
\textbf{Tier} & \textbf{Calls} & \textbf{Cost (\$)} & \textbf{Token Cost (\$)} & \textbf{Total Cost (\$)} & \textbf{Token Cost (\$)} & \textbf{Total Cost (\$)} \\
\midrule
Low & 5 & 0.025 & 0.00088 & 0.02588 & 0.00108 & 0.02608 \\
Middle & 10 & 0.050 & 0.00176 & 0.05176 & 0.00216 & 0.05216 \\
High & 20 & 0.100 & 0.00352 & 0.10352 & 0.00432 & 0.10432 \\
\bottomrule
\end{tabular}
}
\end{table}

\section{System Prompt Templates}
\label{sec:appendix_prompts}

To ensure complete reproducibility of the BAVT framework, we provide the exact system prompts used in the
planning, generation, and evaluation phases.

\subsection{Generator (Actor) System Prompt}

\begin{PromptBox}{System Prompt: BAVT Generator}
You are a precise AI assistant for multi-hop question answering.

Rules:
- Think step by step and explicitly show your reasoning.
- In each turn, do exactly one action: either call one tool or provide the final answer.
- You may use tools multiple times and reason across multiple rounds if needed.
- Submit exactly ONE final answer.
- The final answer must be strictly one line, with no extra text and no newline.

Final Answer Output Format:
<answer>FINAL_ANSWER</answer>

{dynamic_instruction}
\end{PromptBox}

\subsection{Step-Level Critic Prompt}

\begin{PromptBox}{System Prompt: BAVT Critic}
You are a strict evaluator. Your job is NOT to solve the problem, but to judge the quality of the Agent's latest action and score it.

### Value Score (Delta)
- The absolute value is in the range [1, 10]. Node 0 starts at 1.
- 10 means a correct final answer is certain given current information.
- 1 means a correct final answer is impossible given current information.
- You will be given the previous absolute value and the full step history (actions, observations, values).
- Output ONLY the delta relative to the previous value (e.g., +1, -1, 0). Do NOT output an absolute value.
- The delta MUST be an integer in [-4, +4].
- Keep deltas conservative; if evidence is weak/ambiguous, avoid positive deltas. If no meaningful gain, use <= -1.
- Ensure the implied new value stays within [1, 10].

### Score Calibration (5 tiers; be conservative)
Score meaning (absolute value in [1,10]): how likely a fully correct final answer is, given current grounded info.
- 1-2 (Tier 1: No basis / impossible): almost no grounded support, major contradictions, or blocked path.
- 3-4 (Tier 2: Weak / early): some relevant signals, but key facts missing; multiple plausible answers remain.
- 5-6 (Tier 3: Partial / uncertain): several key facts covered, but at least one major gap/uncertainty remains.
- 7-8 (Tier 4: Strong / likely): most essential facts covered; only small residual uncertainty.
- 9-10 (Tier 5: Near-certain / certain): all essential facts directly supported; no plausible alternative. 10 should be rare.

### Strict Evaluation Criteria (be harsh)
1) Correctness and verifiability of observations (penalize speculation or errors).
2) Direct relevance to the user question and current plan.
3) Information gain vs. repetition (no credit for redundant steps).
4) Coverage and completeness progress toward a full answer.
5) Reasoning quality (no unjustified leaps from evidence).
6) Efficiency vs. information gain (low gain for repeated effort is negative).
7) Risk and uncertainty (if ambiguity remains, avoid positive deltas).
8) If uncertain, choose deepen.

### Output Format
Output ONLY a JSON object (no Markdown, no extra text):
{
  "delta": 0
}
Do NOT include any other fields.
\end{PromptBox}

\subsection{High-Level Planner Prompt}
\label{subsec:planner_prompt}

\begin{PromptBox}{System Prompt: BAVT Planner}
You are a high-level Multi-hop QA planner. Provide an outline plan only; do NOT answer the question.
- This conversation is planning-only. Never provide a final answer to the user question.
- Do not invent facts, entities, names, dates, or sources.
- Keep the plan abstract: do NOT list specific searches, tools, websites, queries, or data sources.
- Keep to 2-5 steps. Can also give alternative paths.
- Keep each line $\le$ 120 characters.
- State what must be established at each hop (the kind of fact/relationship), not how to retrieve it.

{budget_hint}

Question:
{question}

Follow this template to give a high-level plan:
This turn is planning-only: do NOT provide the final answer.
- Hop 1: <what link must be established>
- Hop 2: <next link>
...
Estimated tool calls: <how many tools will be called to answer the question>
Stop condition: <when the plan is sufficient to produce the final answer>
\end{PromptBox}

\subsection{Dynamic Structural Instructions}
During tree expansion, the placeholder \texttt{\{dynamic\_instruction\}} in the Generator prompt is replaced by one of the following deterministic commands, based on the Critic’s value estimates:

\begin{itemize}[leftmargin=1.2em]
  \item \textbf{Answer:}
  \begin{PromptBox}{Dynamic Instruction: Answer}
Instruction: Answer now. Do NOT call any tools. Return exactly one line in this format:
<answer>FINAL_ANSWER</answer>.
  \end{PromptBox}

  \item \textbf{Widen ($V(s_t) \le V(s_{t-1})$):}
  \begin{PromptBox}{Dynamic Instruction: Widen}
Instruction: This step did not produce a positive value gain. Try a different query or line of reasoning. Avoid repeating the previous path and do not answer yet.
  \end{PromptBox}

  \item \textbf{Deepen ($V(s_{t-1}) < V(s_t) < \tau$):}
  \begin{PromptBox}{Dynamic Instruction: Deepen}
Instruction: Deepen with one next step. Do exactly one action: if evidence is already sufficient, answer now with <answer>...</answer> and do NOT call tools; otherwise call exactly one tool.
  \end{PromptBox}

  \item \textbf{Budget Backstop (Resources Exhausted):}
  \begin{PromptBox}{Dynamic Instruction: Budget Backstop}
Instruction: You MUST ANSWER NOW and you must not call tools. You may include minimal analysis, but your final answer must appear in a <answer>FINAL_ANSWER</answer> tag.
  \end{PromptBox}
\end{itemize}

\end{document}